\documentclass{article}
\usepackage[utf8]{inputenc}

\usepackage{amsthm,amssymb,amsmath}
\usepackage{natbib}

\theoremstyle{plain}

\newtheorem{corollary}{Corollary}
\newtheorem{proposition}{Proposition}

\usepackage{hyperref}
\usepackage{graphicx}

\title{ Forecast collapse of transformer-based models 
 under squared loss in financial time series}

\begin{document}

\maketitle

\begin{center}

\vspace{1cm}

{\large
\textbf{Pierre Andreoletti}
}

\vspace{0.3cm}

{\it
Institut Denis Poisson, UMR C.N.R.S. 7013, France.
}

\vspace{0.2cm}

{\tt
\href{mailto:Pierre.Andreoletti@univ-orleans.fr}{Pierre.Andreoletti@math.cnrs.fr}
}

\end{center}

\vspace{1cm}

\noindent\textbf{Abstract.}
We study trajectory forecasting under squared loss for time series with weak conditional structure, using highly expressive prediction models. 
Building on the classical characterization of squared-loss risk minimization, we emphasize regimes in which the conditional expectation of future trajectories is effectively degenerate, leading to trivial Bayes-optimal predictors (flat for prices and zero for returns in standard financial settings).

In this regime, increased model expressivity does not improve predictive accuracy but instead introduces spurious trajectory fluctuations around the optimal predictor. These fluctuations arise from the reuse of noise and result in increased prediction variance without any reduction in bias. 

This provides a process-level explanation for the degradation of Transformer-based forecasts on financial time series.

We complement these theoretical results with numerical experiments on high-frequency EUR/USD exchange rate data, analyzing the distribution of trajectory-level forecasting errors. The results show that Transformer-based models yield larger errors than a simple linear benchmark on a large majority of forecasting windows, consistent with the variance-driven mechanism identified by the theory.


\vspace{0.5cm}

\noindent\textbf{MSC2020:}
62M10, 62J02, 68T07.

\noindent\textbf{Keywords:}
time-series forecasting, trajectory prediction, empirical risk minimization, transformers, financial time series, interpolation, noise amplification.

\vspace{0.5cm}

\section{Introduction}

Transformer-based architectures have recently become the dominant paradigm for long-term time-series forecasting.
Following the success of attention mechanisms in natural language processing, numerous works have proposed specialized
Transformer variants tailored to temporal data, aiming to address long-range dependencies, non-stationarity, and scalability.

Prominent examples include Informer \cite{zhou2021informer}, Autoformer \cite{wu2021autoformer}, FEDformer \cite{zhou2022fedformer}, and PatchTST \cite{nie2023patchtst}. These models report strong empirical performance on a variety of structured forecasting benchmarks, including electricity load, traffic flow, weather variables, and sensor measurements.

These models share a common supervised learning formulation: given a fixed-length window of past observations,
the model is trained to predict a multi-step future trajectory by minimizing a regression loss, typically the mean
squared error (MSE).
Architectural innovations including sparse attention, trend seasonal decomposition, frequency-domain modeling,
or patch-based tokenization are introduced to facilitate optimization and representation, while the learning
objective itself remains unchanged.

When the same Transformer-based models are applied to financial time series, however, a markedly different behavior
is consistently observed.
Despite their high expressive power, multi-step forecasts of asset prices tend to collapse toward trivial dynamics,
remaining close to the last observed value, while return forecasts concentrate near zero.

This phenomenon has been widely documented in empirical studies on financial forecasting with deep learning models,
including Transformer-based architectures.
Existing explanations are largely heuristic, invoking low signal-to-noise ratios, non-stationarity, regime changes,
or limited data availability.
As a result, much of the literature implicitly assumes that improved architectures, stronger regularization,
or larger datasets may eventually overcome these difficulties.

In this work, we argue that such expectations are fundamentally misplaced for a broad and practically relevant
class of learning objectives.
We show that the observed forecast collapse is not an artifact of optimization, architectural inadequacy, or data
scarcity.
Rather, it is the theoretically expected outcome of empirical risk minimization (ERM) applied to trajectory
forecasting under squared loss, when the underlying stochastic process exhibits unpredictable innovations a
standard modeling assumption in finance.

Our analysis is conducted at the process level, where both inputs and outputs are temporal segments.
Within this framework, we recall and formalize the fact that, under squared
trajectory loss, the Bayes-optimal predictor is given by the conditional mean of
the future trajectory given the observed past.
While this result is classical in itself, its implications for financial
time-series forecasting with Transformer-based models have not been explicitly
formalized.
Under mild assumptions such as conditional mean-zero returns or martingale price dynamics, we show that this
conditional mean trajectory is provably trivial: flat for prices and identically zero for returns.

Consequently, any sufficiently expressive Transformer-based model trained via ERM regardless of architectural
refinements such as patching, frequency decomposition, or sparse attention is expected to converge toward such
trivial forecasts as data volume increases and optimization improves.

The mechanism underlying this phenomenon is closely related to classical observations on the behavior of highly expressive models trained by empirical risk minimization.

In static regression settings, it is well known that overly flexible predictors may amplify noise and exhibit poor generalization performance. This effect is captured by classical bias--variance trade-offs \cite{Geman1992,Hastie2009}, and more explicitly by interpolation-based methods such as nearest neighbors, which reuse training noise in their predictions \cite{Stone1977,Devroye1996}. More recent work on interpolating estimators and double descent has further shown that zero training error does not necessarily harm generalization when sufficient structure is present in the data \cite{Belkin2019,Bartlett2020}.

However, these results are primarily established in non-temporal settings with scalar outputs, and do not directly address trajectory forecasting problems involving temporal dependence and multi-horizon targets.

In contrast, time-series forecasting involves temporal dependence, overlapping input windows, and multi-horizon targets. In particular, when the conditional structure of the future is weak, as is often the case in financial time series, the role of model expressivity differs fundamentally. In this regime, the noise-reuse mechanism induced by highly expressive models does not vanish asymptotically, but instead leads to systematic fluctuations around a trivial optimal predictor.

Despite the empirical success of Transformer-based architectures on structured benchmarks such as electricity load or traffic forecasting, there remains little theoretical understanding of their behavior in such weakly predictable regimes. Most existing financial applications remain empirical and report limited and unstable predictive gains when applying complex machine learning models \cite{Gu2020,DePrado2018}.

Our contribution addresses this gap by providing a formal analysis of empirical risk minimization over nested hypothesis classes, ranging from simple well-specified models to highly expressive interpolating predictors, in an explicitly temporal forecasting setting.

We show that when the Bayes-optimal predictor is simple and the future is dominated by irreducible noise, enlarging the hypothesis class to include interpolating predictors can strictly increase the expected prediction error, despite increased expressive power.
We then investigate this phenomenon empirically on high-frequency EUR/USD exchange rate data using a PatchTST model, analyzing the distribution of trajectory-level forecasting errors. The results show that the Transformer-based predictor yields almost systematically larger errors than a simple linear benchmark across most of the distribution, consistent with spurious fluctuations around the flat predictor predicted by the theory.

\subsection*{Notations and hypothesis}

We consider a discrete-time stochastic process $(X_t)_{t\in\mathbb{Z}}$
defined on a probability space $(\Omega,\mathcal G,\mathbb P)$
and adapted to a filtration $(\mathcal G_t)_{t\in\mathbb{Z}}$.

Fix two integers $L \ge 1$ (lookback window) and $H \ge 1$ (forecast horizon).
At time $t$, the learner observes the past trajectory
\[
\mathbf X_t^{(L)} := (X_{t-L+1},\dots,X_t) \in \mathbb R^L,
\qquad
\mathcal F_t := \sigma(\mathbf X_t^{(L)}) \subseteq \mathcal G_t,
\]
where $\mathcal F_t$ represents the information available to the learner at time $t$
(formally, the $\sigma$-algebra generated by $\mathbf X_t^{(L)}$),
and aims to predict the future trajectory
\[
\mathbf Y_t^{(H)} := (X_{t+1},\dots,X_{t+H}) \in \mathbb R^H.
\]

This framework encompasses a wide range of time-series forecasting applications,
including energy consumption, traffic flow, environmental measurements, and
financial prices, also this trajectory-to-trajectory formulation is standard in long-term forecasting with Transformer-based models, including Informer, Autoformer, and PatchTST. 

A predictor of the future trajectory is thus a measurable function
$f : \mathbb R^L \to \mathbb R^H$ mapping the observed past window
to a multi-step forecast.
Such predictors are typically modeled as elements of a hypothesis class,
often represented in practice by parametric families of functions,
for instance neural networks.

Formally, we model a neural network architecture as a parametric family
of measurable functions.
Let $\mathcal U \subset \mathbb R^L$ and $\mathcal Y \subset \mathbb R^H$
denote the input and output spaces, respectively, and let
$\Theta \subset \mathbb R^p$ be a parameter space (with $p$ a positive integer).
Each parameter $\theta \in \Theta$ defines a predictor
$f_\theta : \mathcal U \to \mathcal Y$,  the set of these functions  is called hypothesis class :
\[
\mathcal H
:=
\{ f_\theta : \mathcal U \to \mathcal Y \mid \theta \in \Theta \}.
\]

Thus, $\mathcal H$ is the set of all trajectory-to-trajectory predictors
that can be realized by the considered model architecture for some
parameter choice.
Our analysis depends only on this induced hypothesis class $\mathcal H$,
and not on the specific parametrization or training algorithm. \\
Given a predictor $f\in\mathcal H$, the model produces at time $t$ a forecast of
the future trajectory by evaluating $f$ on the observed past window
$\mathbf X_t^{(L)}$, yielding the prediction $f(\mathbf X_t^{(L)})\in\mathbb R^H$. Then, the learning objective we consider here is defined through a trajectory-level
squared loss $\ell$,
\[
\ell\big(\mathbf{Y}_t^{(H)},f(\mathbf{X}_t^{(L)})\big)
=
\|\mathbf{Y}_t^{(H)} - f(\mathbf{X}_t^{(L)})\|_2^2
:=
\sum_{h=1}^H \big(X_{t+h} - f_h(\mathbf{X}_t^{(L)})\big)^2,
\]
where $f_h$ is the coordinate $h$ of $f$. The corresponding population risk is the mean of the above loss
\[
\mathcal{R}(f)
:=
\mathbb{E}\Big[\|\mathbf{Y}_t^{(H)} - f(\mathbf{X}_t^{(L)})\|_2^2\Big].
\]

Considering data, we introduce the corresponding estimator of $\mathcal{R}$ : let $t_1,\dots,t_n$ denote observation times used for training.
The associated learning sample is denoted 
\[
\mathcal D_n := \bigl\{ (\mathbf X_{t_i}^{(L)}, \mathbf Y_{t_i}^{(H)}) \bigr\}_{i=1}^n,
\]
and the empirical risk of a predictor $f \in \mathcal{H}$ is defined as
\[
\hat{\mathcal{R}}_n(f)
:=
\frac{1}{n}\sum_{i=1}^n
\ell\bigl(\mathbf{Y}_{t_i}^{(H)},\, f(\mathbf{X}_{t_i}^{(L)})\bigr).
\]

To ensure that empirical risk minimization yields meaningful guarantees,
we assume that the empirical risk provides a uniform approximation of the
population risk over the hypothesis class.

\noindent Specifically, we assume that $\mathcal H$ is (weak) \emph{Glivenko-Cantelli with
respect to the loss $\ell$}, that is,
\begin{align}
\sup_{f\in\mathcal H}
\left|
\hat{\mathcal R}_n(f)-\mathcal R(f)
\right|
\rightarrow 0
\qquad \text{in probability as } n \to \infty. \label{ERMC}
\end{align}

 \noindent  This assumption is standard in statistical learning theory and ensures that
empirical risk minimization is asymptotically consistent within the class
$\mathcal H$.

Then, if the empirical risks converge uniformly to their population
counterparts over $\mathcal H$, then any empirical risk minimizer achieves
asymptotically the optimal population risk over $\mathcal H$. More precisely, let
\begin{align}
\hat f_n \in \arg\min_{f\in\mathcal H} \widehat{\mathcal R}_n(f)
\label{RiskM}
\end{align}
be a (measurable) empirical risk minimizer.
This estimator is data-dependent and targets the best-in-class predictor under the empirical trajectory risk. Then if $\mathcal H$ is Glivenko--Cantelli with respect to the loss $\ell$, then
\begin{align}
|\mathcal R(\hat f_n) - \inf_{f\in\mathcal H} \mathcal R(f)| \rightarrow 0 \quad \text{in\ probability}. \label{f_nconsist}
\end{align}
Note that for a fixed predictor $f$, the population risk $\mathcal R(f)$ is deterministic. However, since $\hat f_n$ depends on the training sample, the quantity $\mathcal R(\hat f_n)$ is itself random. The expectation in the definition of $\mathcal R$ is taken with respect to a fresh draw $(X,Y)$ from the data-generating process, independently of the training data $\mathcal D_n$. \\
This convergence ensures that, under uniform convergence, empirical risk minimization is an asymptotically consistent procedure for minimizing the population risk over the hypothesis class. For completeness, a proof of this fact is given in the Appendix A.1.

To identify the limit of empirical risk minimization, we now characterize
the population risk minimizer over the hypothesis class. Let $(U,V)$ be a generic input-output pair with the same distribution as
$(\mathbf X_t^{(L)},\mathbf Y_t^{(H)})$.
The population risk for $(U,V)$ can then be written as
\[
\mathcal R(f) = \mathbb E\!\left[\|V - f(U)\|_2^2\right].
\]
\noindent Under squared loss, a minimizer of the population risk $\mathcal R(f)$ over all measurable predictors is given by the conditional mean, for any $u \in \mathbb R^L$
\[
m(u) := \mathbb E[V \mid U=u].
\]
For completeness, a short proof is given in Section~\ref{sec2}.
We also assume that the model is \emph{well specified}, in the sense that the Bayes predictor belongs to the hypothesis class, i.e.,
\[
m \in \mathcal H.
\]
Denoting $f^\star := m$, it follows that
\[
\inf_{f\in\mathcal H}\mathcal R(f) = \mathcal R(f^\star).
\]
Hence, by~\eqref{f_nconsist},
\[
\left| \mathcal R(\hat f_n) - \mathcal R(f^\star) \right| \rightarrow 0
\qquad \text{in probability as } n \to \infty.
\]

\medskip
\medskip
\subsection*{Main results and organization of the paper.}
Our results can be summarized as follows.

In Section~\ref{sec2}, we first recall a classical fact: under squared trajectory loss, the population-risk minimizer is the conditional mean trajectory of the future given the observed past. Our contribution is not this fact itself, but the emphasis on its consequences for forecasting regimes with weak conditional structure. In particular, we show that this learning objective is well aligned with structured time series, where the future retains a non-trivial dependence on the past, but becomes effectively degenerate when the innovations are too weakly predictable from past information, as is typically the case in finance. Under standard martingale-type assumptions, this implies that the Bayes-optimal predictor is trivial: flat for prices and zero for returns.

In Section~3, we then study the statistical consequences of this phenomenon for model selection. We compare a simple well-specified linear predictor with a much richer class containing interpolating estimators. Our analysis shows that, in the weak-signal regime described above, the additional fluctuations produced by highly expressive predictors are not informative signal, but model-induced variability generated through the reuse of noise. In particular, this excess variability is larger than in the linear case and results in strictly worse out-of-sample prediction performance.

Finally, in Section~4, we confront these theoretical predictions with numerical experiments on high-frequency EUR/USD data, a setting whose statistical properties are representative of standard financial forecasting problems. We show that the empirical distribution of trajectory-level forecasting errors is consistent with the theory: the Transformer-based predictor exhibits systematically larger errors than a simple linear benchmark.

For readability, the proofs of the auxiliary consistency results and technical bounds are collected in the Appendix.

\section{Succeed and collapse forecasting for time series \label{sec2}}

\subsection*{When does trajectory forecasting succeed ?}

Recall that throughout this work, we consider the squared trajectory loss
 $\ell(y,\hat y)=\|y-\hat y\|_2^2$ with $y,\hat y\in\mathbb R^H.$
In our forecasting setting, $y=\mathbf Y_t^{(H)}$ represents the future trajectory
$(X_{t+1},$ $\dots,X_{t+H})$, while $\hat y=f(\mathbf X_t^{(L)})$ denotes its
prediction based on the observed past.
So this loss treats all future time steps symmetrically and penalizes pointwise
deviations over the entire forecast horizon.
As a consequence, minimizing the associated risk implicitly assumes that the
relevant predictive information is contained in the conditional mean of the
future trajectory given the observed past.

Such an objective is well suited to settings where the process admits a
structured and predictable component, such as seasonal or trend-driven dynamics
commonly encountered in electricity consumption, traffic flows, or other
physical systems.
We now characterize the optimal predictor associated with this loss.

\begin{proposition}
\label{prop:bayes}
Among all predictors measurable with respect to $\mathcal F_t$, the unique
minimizer of the population risk $\mathcal R(f)$ is given by
\[
f^\star(\mathbf X_t^{(L)})
=
\mathbb E\!\left[\mathbf Y_t^{(H)} \mid \mathcal F_t\right]
=
\bigl(
\mathbb E[X_{t+1}\mid\mathcal F_t],\dots,
\mathbb E[X_{t+H}\mid\mathcal F_t]
\bigr).
\]
\end{proposition}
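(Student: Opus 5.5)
The plan is the classical orthogonal (Pythagorean) decomposition of the squared loss in $L^2$. Throughout I assume $\mathbb E\|\mathbf Y_t^{(H)}\|_2^2<\infty$, so the conditional expectation $m_t:=\mathbb E[\mathbf Y_t^{(H)}\mid\mathcal F_t]$ exists coordinatewise and is square integrable. I restrict attention to predictors $g=f(\mathbf X_t^{(L)})$ that are $\mathcal F_t$-measurable with $\mathbb E\|g\|_2^2<\infty$; otherwise $\mathcal R(f)=+\infty$ and there is nothing to prove. By the Doob--Dynkin lemma, $m_t=f^\star(\mathbf X_t^{(L)})$ for some Borel $f^\star:\mathbb R^L\to\mathbb R^H$, which is exactly the $m(u)=\mathbb E[V\mid U=u]$ introduced earlier. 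Since the squared trajectory loss is a sum over coordinates $h=1,\dots,H$, it suffices to work coordinatewise and then sum. This already yields the componentwise form $\bigl(\mathbb E[X_{t+1}\mid\mathcal F_t],\dots,\mathbb E[X_{t+H}\mid\mathcal F_t]\bigr)$ by linearity of conditional expectation applied to each coordinate.

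The key step is the expansion
\[
\mathbb E\|\mathbf Y_t^{(H)}-g\|_2^2=\mathbb E\|\mathbf Y_t^{(H)}-m_t\|_2^2+\mathbb E\|m_t-g\|_2^2+2\,\mathbb E\bigl[\langle \mathbf Y_t^{(H)}-m_t,\;m_t-g\rangle\bigr].
\]
To show the cross term vanishes, I will condition on $\mathcal F_t$. Because $m_t-g$ is $\mathcal F_t$-measurable, the tower property and "taking out what is known" give $\mathbb E[\langle \mathbf Y_t^{(H)}-m_t,m_t-g\rangle\mid\mathcal F_t]=\langle \mathbb E[\mathbf Y_t^{(H)}\mid\mathcal F_t]-m_t,\,m_t-g\rangle=0$. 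The one technical point is integrability of the product, which I will justify by Cauchy--Schwarz from the $L^2$ assumptions. This is the only step needing care, and I expect it to be the main, if mild, obstacle: the argument must avoid conditioning on non-integrable quantities. Hence $\mathcal R(f)=\mathcal R(f^\star)+\mathbb E\|m_t-g\|_2^2\ge\mathcal R(f^\star)$.

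Uniqueness then follows directly. Equality holds iff $\mathbb E\|m_t-g\|_2^2=0$, that is, $g=m_t$ almost surely, or equivalently $f=f^\star$ $P_{\mathbf X_t^{(L)}}$-a.e. I will state explicitly that uniqueness is understood in this almost-sure sense, since predictors differing on a null set of inputs have the same risk.
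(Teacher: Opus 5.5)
Your proof is correct. Its core step is the same one the paper relies on: the cross term $\mathbb E[\langle \mathbf Y_t^{(H)}-m_t,\,m_t-g\rangle]$ vanishes by conditioning on $\mathcal F_t$ and taking out what is known, which is exactly the orthogonality check in the paper.

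The difference is in how this step is used. The paper works in $L^2(\Omega;\mathbb R^H)$ and applies the Hilbert projection theorem on the subspace $\mathcal S$ of $\mathcal F_t$-measurable vectors, which it asserts is closed. That gives a unique minimizer characterized by orthogonality, and the paper identifies it with $\mathbb E[\mathbf Y_t^{(H)}\mid\mathcal F_t]$ because the latter satisfies the orthogonality relation. You instead expand the square directly and get the excess-risk identity $\mathcal R(f)=\mathcal R(f^\star)+\mathbb E\|m_t-g\|_2^2$, from which minimality and uniqueness both follow at once.

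Your route is more elementary and self-contained, and it buys three things:
\begin{itemize}
\item it needs neither the projection theorem nor the closedness of $\mathcal S$, which the paper states without justification;
\item it gives an explicit formula for how far any predictor is from optimal;
\item it spells out points the paper leaves implicit: predictors with $\mathbb E\|f(\mathbf X_t^{(L)})\|_2^2=\infty$ have infinite risk when $\mathbf Y_t^{(H)}\in L^2$; $f^\star$ is a genuine Borel function of the window (via Doob--Dynkin); and ``unique'' can only mean unique up to $P_{\mathbf X_t^{(L)}}$-null sets.
\end{itemize}
The paper's route is more structural: it presents conditional expectation as orthogonal projection onto $\mathcal S$, so existence and uniqueness come from general Hilbert-space theory rather than from a computation.
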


\begin{proof}
Let $Y:=\mathbf Y_t^{(H)}$ which belongs to the Hilbert space
$L^2(\Omega;\mathbb R^H)$ equipped with the inner product $\langle A,B\rangle := \mathbb E[A^\top B]$, for any $A,B\in L^2(\Omega;\mathbb R^H).$
Let $\mathcal S$ be the closed linear subspace of $L^2(\Omega;\mathbb R^H)$
consisting of all $\mathcal F_t$-measurable $\mathbb R^H$-valued random variables.
For any predictor function $f$ in $\mathcal{H}$ of Y, the random vector
$Z:=f(\mathbf X_t^{(L)})$ belongs to $\mathcal S$, and
\[
\mathcal R(f)=\mathbb E\|Y-Z\|_2^2.
\]

By the projection theorem in Hilbert spaces, there exists a unique element
$\Pi_{\mathcal S}Y\in\mathcal S$ minimizing $\mathbb E\|Y-Z\|_2^2$ over $Z\in\mathcal S$.
Moreover, $\Pi_{\mathcal S}Y$ is characterized by the orthogonality condition
\[
\mathbb E\big[(Y-\Pi_{\mathcal S}Y)^\top W\big]=0
\quad\text{for all } W\in\mathcal S.
\]
By definition, conditional expectation $\mathbb E[Y\mid\mathcal F_t]$, belongs to $\mathcal S$
moreover it satisfies this orthogonality condition, since for any $\mathcal F_t$-measurable
$W$,
\begin{align*}
\mathbb E\!\left[(Y-\mathbb E[Y\mid\mathcal F_t])^\top W\right]
 & =
\mathbb E\!\left[
\mathbb E\!\left[(Y-\mathbb E[Y\mid\mathcal F_t])^\top W \,\middle|\, \mathcal F_t\right]
\right] \\
& =
\mathbb E\!\left[
W^\top\,\mathbb E\!\left[Y-\mathbb E[Y\mid\mathcal F_t]\mid \mathcal F_t\right]
\right]
=0.
\end{align*}
Therefore, $\Pi_{\mathcal S}Y=\mathbb E[Y\mid\mathcal F_t]$, which proves the claim.
\end{proof}

Proposition ~\ref{prop:bayes} shows that, under a quadratic trajectory loss, learning reduces to estimating the conditional mean of the future trajectory given the observed past.
Whether this conditional mean is informative or degenerate depends entirely on the structure of the underlying process.
This observation provides a unifying perspective on the empirical success of Transformer-based models in structured forecasting tasks and their apparent failure in other domains, which we analyze below.


A common feature of such time series is the presence of a structured and predictable component in the dynamics.
At a high level, these processes can be represented as
\[
X_{t+1} = g(\mathcal{F}_t) + \varepsilon_{t+1},
\]
where $g(\mathcal{F}_t)$ captures deterministic or slowly varying structure (e.g., seasonality, trends, physical constraints), and $\varepsilon_{t+1}$ denotes a noise term satisfying
\[
\mathbb{E}[\varepsilon_{t+1} \mid \mathcal{F}_t] = 0.
\]
\noindent In this setting, the conditional mean of the process is non-trivial and given by
\[
\mathbb{E}[X_{t+1} \mid \mathcal{F}_t] = g(\mathcal{F}_t).
\]
Consequently, the Bayes-optimal predictor characterized in Proposition \ref{prop:bayes} recovers the structured component of the dynamics.
For multi-step forecasting, the conditional mean trajectory inherits this structure and provides an informative target for learning.

This observation explains why minimizing a squared trajectory loss is appropriate for such benchmarks.
The loss directly encourages the model to approximate $g(\mathcal{F}_t)$ across future horizons, and increased model capacity improves the approximation of this conditional mean.
Architectural choices specific to Transformer-based models such as attention mechanisms, patch-based tokenization, or frequency domain decompositions serve to facilitate this approximation by improving representation and optimization, rather than altering the learning objective itself.

Crucially, in these settings, the success of Transformer based forecasting models does not rely on predicting noise, but on exploiting predictable structure present in the conditional mean of the process.
The effectiveness of trajectory forecasting under squared loss therefore hinges on the existence of such structure. \\
In the absence of a non-trivial conditional mean, as in finance time series, the same learning objective leads to qualitatively different outcomes, as we show in the next section.

\subsection*{Financial time series and forecast collapse}

We now specialize the general framework of the preceding paragraph to financial time series.

We emphasize that the learning setup remains unchanged: the model observes a finite window of past values and is trained to predict a future trajectory under a squared loss.

A standard modeling assumption in finance is that asset returns exhibit little
or no stable predictable bias given past information.
At the level of prices, this principle is commonly formalized through a
martingale-type condition, according to which the conditional expectation of
future prices given the available information coincides with the current price.
This idea dates back to early equilibrium arguments, most notably the seminal
result of \cite{Samuelson65}, who showed that properly anticipated
prices must fluctuate randomly, and was later popularized in the efficient
market hypothesis formulated by  \cite{Fama70}.

Since then, a vast empirical literature has investigated deviations from
random walk or martingale behavior in financial time series.
While formal statistical tests often reject exact random-walk hypotheses
\cite{LoMacKinlay88}, subsequent analyses have consistently shown that any
predictability in the conditional mean of returns is weak, unstable, and largely
dominated by noise, especially out of sample.
Comprehensive econometric studies confirm that most of the statistically
significant structure in asset returns is concentrated in higher-order
properties such as volatility clustering, heavy tails, or cross-sectional
effects rather than in the conditional expectation itself
\cite{Campbell97,Cont01}.

More recent work has reinforced this view in both classical econometrics and
modern machine learning settings.
Large-scale empirical studies show that return-predictive models, including
highly flexible specifications, rarely outperform simple historical averages or
martingale benchmarks in out-of-sample forecasting
\cite{GoyalWelch08}.
Even when machine learning methods are employed, their gains are primarily
observed in cross-sectional prediction rather than in the time-series
forecasting of aggregate returns
\cite{GuKellyXiu20}.
These findings suggest that, for practical forecasting purposes under squared
loss, assuming a nearly flat conditional mean for prices or a near-zero
conditional mean for returns provides a reasonable and robust baseline.

Importantly, the conclusions of the present work do not rely on an exact
martingale assumption.
They extend to a broad class of stochastic processes whose conditional mean of
the future remains close to the last observed value, possibly up to a slowly
varying drift or a weak, regime-dependent component.
In such settings, any predictable structure in the conditional mean is dominated
by stochastic innovations, so that empirical risk minimization with highly
expressive models is expected to converge toward trivial forecasts.
Additional model complexity or stochastic training effects can only introduce
variance around this baseline, without generating genuine predictive content.

While these empirical findings strongly suggest that the conditional mean of financial time series is weakly informative, they do not provide a precise understanding of the behavior of learning algorithms in this regime. In particular, they do not explain how empirical risk minimization interacts with model expressivity at the process level, where both inputs and outputs are temporal trajectories.

The contribution of the present work is to make this connection explicit. We show that, under squared trajectory loss, the apparent failure of highly expressive models in financial forecasting is not a consequence of optimization difficulties or data limitations, but a direct implication of the learning objective itself. In this regime, increasing model complexity does not uncover hidden structure, but instead amplifies noise through interpolation, leading to degraded predictive performance.

Throughout this section, $X_t$ denotes either a price (or log-price) process, and $\mathcal{G}_t$ the natural filtration representing the information available up to time $t$.
For the sake of clarity and analytical tractability, we nevertheless adopt the
standard martingale-type assumption commonly used in finance. 
Specifically, we assume that prices satisfy
\[
\mathbb{E}[X_{t+1}\mid \mathcal G_t] = X_t
\quad \text{almost surely for all } t.
\]
This assumption does not impose independence of increments and allows for
time-varying, clustered volatility and other forms of conditional heteroskedasticity.
It should therefore be understood as an idealized baseline capturing the absence
of systematic predictability in the conditional mean, rather than as a literal
description of market dynamics. A consequence of this hypothesis  is the following Corollary.

\begin{corollary}
\label{cor:bayes_flat}
Assume that $(X_t)$ is a martingale with respect to $(\mathcal G_t)$ and that
$\mathcal F_t \subseteq \mathcal G_t$.
Recall that $f^\star$ denote the Bayes-optimal predictor under squared trajectory loss. Then, for any forecast horizon $H\ge 1$,
\[
f^\star(\mathbf X_t^{(L)})
=
(X_t,\dots,X_t)\in\mathbb R^H.
\]
\end{corollary}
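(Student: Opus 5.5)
By Proposition~\ref{prop:bayes}, $f^\star(\mathbf X_t^{(L)})$ is the vector of conditional expectations $\mathbb E[X_{t+h}\mid\mathcal F_t]$, $h=1,\dots,H$. So it suffices to show, coordinate by coordinate, that $\mathbb E[X_{t+h}\mid\mathcal F_t]=X_t$ almost surely for every $h\ge 1$. I assume throughout that $X_t\in L^2$, so that the Hilbert-space setting of Proposition~\ref{prop:bayes} applies.

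The first step is to show that $\mathbb E[X_{t+h}\mid\mathcal G_t]=X_t$ for all $h\ge1$, by induction on $h$. The case $h=1$ is the martingale assumption. For the inductive step, use the tower property with $\mathcal G_t\subseteq\mathcal G_{t+h}$:
\[
\mathbb E[X_{t+h+1}\mid\mathcal G_t]=\mathbb E\bigl[\mathbb E[X_{t+h+1}\mid\mathcal G_{t+h}]\mid\mathcal G_t\bigr]=\mathbb E[X_{t+h}\mid\mathcal G_t]=X_t .
\]

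The second step passes from the large filtration $\mathcal G_t$ to the learner's information $\mathcal F_t=\sigma(\mathbf X_t^{(L)})\subseteq\mathcal G_t$. Applying the tower property again gives
\[
\mathbb E[X_{t+h}\mid\mathcal F_t]=\mathbb E\bigl[\mathbb E[X_{t+h}\mid\mathcal G_t]\mid\mathcal F_t\bigr]=\mathbb E[X_t\mid\mathcal F_t]=X_t .
\]
The final equality holds because $X_t$ is the last coordinate of $\mathbf X_t^{(L)}$, hence $\mathcal F_t$-measurable. This is the one point to check carefully: the last observed value must lie in the learner's window. That is guaranteed by the definition of $\mathbf X_t^{(L)}$ since $L\ge1$.

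Stacking the $H$ coordinates yields $f^\star(\mathbf X_t^{(L)})=(X_t,\dots,X_t)$. Uniqueness (almost surely) is inherited from Proposition~\ref{prop:bayes}. There is no real obstacle here: the argument is two applications of the tower property. The only substantive points are the integrability needed for Proposition~\ref{prop:bayes} and the $\mathcal F_t$-measurability of $X_t$.
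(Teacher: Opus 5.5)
Your proof is correct and takes the same route as the paper: apply Proposition~\ref{prop:bayes}, then show $\mathbb E[X_{t+h}\mid\mathcal F_t]=X_t$ coordinate by coordinate. The paper compresses this second step into ``by the martingale property,'' whereas you spell out what it leaves implicit: the induction over $h$ via the tower property in $\mathcal G_t$, the conditioning down to $\mathcal F_t\subseteq\mathcal G_t$, and the $\mathcal F_t$-measurability of $X_t$.
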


\begin{proof}
By Proposition~\ref{prop:bayes},
\[
f^\star(\mathbf X_t^{(L)})=\mathbb E[\mathbf Y_t^{(H)}\mid \mathcal F_t],
\qquad
\mathbf Y_t^{(H)}=(X_{t+1},\dots,X_{t+H}).
\]
By the martingale property, $\mathbb E[X_{t+h}\mid \mathcal F_t]=X_t$, this holds for all $h=1,\dots,H$, proving the claim.
\end{proof}

Notice also that, under the martingale assumption on $(X_t)$, the associated one-step
returns $R_{t+1}:=X_{t+1}-X_t$ automatically have zero conditional mean.
As a consequence, the Bayes-optimal predictor under squared loss for the future
return trajectory is trivial: for any lookback window $\mathbf R_t^{(L)}$, one has
$f^\star(\mathbf R_t^{(L)})=(0,\dots,0)\in\mathbb R^H$.

\medskip


We now pursue the analysis by comparing the prediction errors of a simple
well-specified linear model with those of a highly expressive, strongly
interpolating predictor.
While both models are trained via empirical risk minimization, their behavior
differs fundamentally when the Bayes-optimal forecast is trivial, allowing us to
isolate the contribution of model complexity to excess prediction error, beyond
the irreducible noise level.

\section{Prediction error for signals without identifiable structure}

In this section we keep the trajectory forecasting setup introduced in the previous section and we put ourselves in the hypothesis of Corollary \ref{cor:bayes_flat}.
In addition to the set  $\mathcal{H}$ we introduce a more simple set of estimator defined as follows :
consider the one-parameter linear hypothesis class
\[
\mathcal H^{\ell} := \{ f_a(u)=a\cdot x(u) \cdot (1,\cdots,1) : a\in\mathbb R, u \in \mathbb R^H\},
\]
with $x(u)$ the last coordinate of $u$ and let us denote $\hat f^{\ell}_n$ be the empirical risk minimizer over $\mathcal H^{\ell}$
trained on the sample $\mathcal D_n := \bigl\{ (\mathbf X_{t_i}^{(L)}, \mathbf Y_{t_i}^{(H)}) \bigr\}_{i=1}^n$ with squared loss, it is given for any $u \in \mathbb R^L$ by 
\begin{align}
    \hat f^{\ell}_n(u):=\hat a\cdot x(u)  \cdot (1,\cdots,1) \in \mathbb R^H , \label{fl}
\end{align} 
where $\hat a$ denotes the least-squares estimator of the scalar parameter $a$ computed over $\mathcal D_n$ (see the Appendix A.2).  Contrarily to $\mathcal H^{\ell} $, $\mathcal H $ is considered richer and contains, in particular,
nearest-neighbor–type interpolating predictors of the form: for any $u \in \mathbb R^L$
\[
\hat f_n(u) := Y_{t_{i^{\star}(u)}}^{(H)},
\qquad
i^{\star}(u)\in argmin_{1\le i\le n}\|u-\mathbf X_{t_i}^{(L)}\|_2^2,
\]
where a unique index is selected whenever the minimum is not unique.
Any sufficiently expressive model class (including large neural networks)
can approximate such functions arbitrarily well.

\noindent The two classes above correspond to fundamentally different statistical regimes.
The class $\mathcal H^{\ell}$ is parametric, with fixed dimension and vanishing estimation
variance as the sample size increases.
By contrast, $\mathcal H$ admits interpolating predictors whose effective complexity
grows with the data, in the sense that the learned function adapts to the training
sample size, and which can exactly reproduce the training outputs.

When the conditional structure of the data is simple as in the present setting,
where the Bayes predictor is linear and the future is dominated by noise the additional
expressive power of $\mathcal H$ does not reduce bias, but instead induces extra
variance by re-injecting training noise into the prediction.
This phenomenon is not specific to nearest neighbors or to any particular architecture,
but is a generic feature of highly expressive, weakly regularized models.

The following proposition formalizes this intuition by comparing empirical risk
minimization over $\mathcal H^{\ell}$ and over a strictly richer class $\mathcal H$ containing
interpolating predictors, and shows that the latter can yield a strictly larger
expected prediction error.

\begin{proposition}
\label{prop:final_compare}
Fix $L\ge 1$ and $H\ge 1$. Assume the forecasting target satisfies
\[
\mathbf{Y}_t^{(H)} = (X_t,\dots,X_t) + \varepsilon_t \in \mathbb R^H,
\qquad
\mathbb E[\varepsilon_t\mid \mathcal F_t]=0,
\qquad
\mathbb E\|\varepsilon_t\|_2^2 = H\sigma^2<\infty.
\]
Under the regularity assumptions stated in Appendix~A.2, we have
\[
\mathbb E\|\mathbf{Y}_t^{(H)}-\hat f_n(\mathbf{X}_t^{(L)})\|_2^2 \;\ge\; 2H\sigma^2,
\qquad
\mathbb E\|\mathbf{Y}_t^{(H)}-\hat f^{\ell}_n(\mathbf{X}_t^{(L)})\|_2^2
= H\sigma^2 + O\!\left(\frac{H\sigma^2}{n}\right).
\]
Consequently, for $n$ large enough,
\[
\mathbb E\|\mathbf{Y}_t^{(H)}-\hat f_n(\mathbf{X}_t^{(L)})\|_2^2
>
\mathbb E\|\mathbf{Y}_t^{(H)}-\hat f^{\ell}_n(\mathbf{X}_t^{(L)})\|_2^2.
\]
\end{proposition}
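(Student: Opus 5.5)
Both bounds rest on splitting the test error into irreducible noise plus estimation error. For any data-dependent predictor $\hat g$ built from $\mathcal D_n$ and a fresh test pair $(\mathbf X_t^{(L)},\mathbf Y_t^{(H)})$, write $\mathbf Y_t^{(H)}=m(\mathbf X_t^{(L)})+\varepsilon_t$ with $m(u)=x(u)(1,\dots,1)$, which by Corollary~\ref{cor:bayes_flat} is the Bayes predictor. Because the test pair is independent of the training sample and $\mathbb E[\varepsilon_t\mid\mathcal F_t]=0$, the cross term vanishes after conditioning on $\mathcal D_n$ and $\mathbf X_t^{(L)}$. This gives
\[
\mathbb E\|\mathbf Y_t^{(H)}-\hat g(\mathbf X_t^{(L)})\|_2^2
= H\sigma^2+\mathbb E\|m(\mathbf X_t^{(L)})-\hat g(\mathbf X_t^{(L)})\|_2^2 ,
\]
which is the Pythagorean identity behind Proposition~\ref{prop:bayes}. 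It remains to bound the second term for each of the two predictors.

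\textbf{Linear class.} By~\eqref{fl}, the excess term equals $H\,\mathbb E[(\hat a-1)^2 x(\mathbf X_t^{(L)})^2]$. The least-squares estimator is
\[
\hat a=\frac{\sum_{i,h}X_{t_i}X_{t_i+h}}{H\sum_i X_{t_i}^2},
\qquad\text{so}\qquad
\hat a-1=\frac{\sum_i X_{t_i}\bar\varepsilon_{t_i}}{\sum_i X_{t_i}^2},
\]
where $\bar\varepsilon$ is the average of the $H$ noise coordinates. I would invoke the Appendix~A.2 regularity assumptions (bounded second moments of $x$, $\frac1n\sum_i X_{t_i}^2$ bounded away from zero, and enough decorrelation or independence between test and training data and across training windows) to obtain $\mathbb E[(\hat a-1)^2x^2]=O(\sigma^2/n)$. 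The martingale property makes the products $X_{t_i}\bar\varepsilon_{t_i}$ uncorrelated when the windows do not overlap, so the numerator has variance of order $n\sigma^2\,\mathbb E X^2$. The excess is therefore $O(H\sigma^2/n)$, which yields the stated asymptotic equality.

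\textbf{Interpolating class.} With $i^\star=i^\star(\mathbf X_t^{(L)})$, the prediction is
\[
\hat f_n(\mathbf X_t^{(L)})=m(\mathbf X_{t_{i^\star}}^{(L)})+\varepsilon_{t_{i^\star}} .
\]
The excess term thus equals
\[
\mathbb E\|m(\mathbf X_t^{(L)})-m(\mathbf X_{t_{i^\star}}^{(L)})-\varepsilon_{t_{i^\star}}\|^2 .
\]
The index $i^\star$ depends only on the inputs, and under the Appendix assumptions $\varepsilon_{t_i}$ is conditionally centred given all training inputs and the test input. The cross term then vanishes and the excess is at least $\mathbb E\|\varepsilon_{t_{i^\star}}\|^2$. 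To show this equals $H\sigma^2$, I would condition on the inputs, sum over the events $\{i^\star=i\}$, and use that the conditional second moment of the noise given the inputs has expectation $H\sigma^2$ (homoskedasticity, or a selection-independence assumption from A.2). Combined with the decomposition, this gives $\ge 2H\sigma^2$, and the strict comparison follows for large $n$ since $2H\sigma^2>H\sigma^2+O(H\sigma^2/n)$ once $\sigma>0$.

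\textbf{Main obstacle.} The hard part is this last step. With overlapping windows, $\varepsilon_{t_i}$ consists of future values that may appear inside other training inputs, and under volatility clustering the selection $i^\star$ can favour windows with atypical noise variance. Both conditional centring and the identity $\mathbb E\|\varepsilon_{t_{i^\star}}\|^2=H\sigma^2$ can then fail. My first attempt would be to rely on an explicit A.2 hypothesis that the training windows are separated (non-overlapping, e.g.\ $t_{i+1}-t_i\ge L+H$) and that $\mathbb E[\|\varepsilon_{t_i}\|^2\mid \text{all inputs}]=H\sigma^2$. If the appendix is weaker, I would settle for a lower bound $\ge H\sigma^2+c\,H\sigma^2$ through a conditional-variance floor.
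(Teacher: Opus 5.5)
Your skeleton is the paper's. The paper expands $V-\hat f_n(U)=(x(U)-x(\mathbf X_{t_{i^\star}}^{(L)}))\mathbf 1_H+(\varepsilon-\varepsilon_{t_{i^\star}})$ in one step instead of first peeling off the test noise. For the linear class it makes the same reduction to $H\sigma^2+H\,\mathbb E[(\hat a-1)^2x(U)^2]$. It does not, however, bound the variance of the numerator $M_n=\sum_i x_i\xi_i$. Instead it proves $\mathbb E[(\hat a-1)^2]\le 4\sigma^2\,\mathbb E[1/V_n]$ with a self-normalized Bercu--Touati inequality under (A1)--(A3), then gets $\mathbb E[1/V_n]=O(1/n)$ from (A4)--(A5). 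Your heuristic ($\mathbb E M_n^2=O(n\sigma^2)$ and $V_n\asymp n$) gives $\mathbb E[(M_n/V_n)^2]=O(\sigma^2/n)$ as stated only if $V_n\ge cn$ almost surely. Appendix A.2 provides that only with high probability, so the event $\{V_n<cn\}$ still needs a self-normalized argument.

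The genuine gap is the interpolation step, which you leave conditional on hypotheses that Appendix A.2 does not contain. (A1)--(A5) concern only the linear fit: the scalar innovations $\xi_i$ and the regressors $x_i$. Moreover, (A2) is an upper bound on $\mathbb E[\xi_i^2\mid\mathcal F_{i-1}]$, not an identity for $\mathbb E\|\varepsilon_{t_{i^\star}}\|_2^2$. The paper does not derive the two facts you need either. It simply asserts $\mathbb E[\varepsilon-\varepsilon_{t_{i^\star}}\mid U,\mathbf X_{t_{i^\star}}^{(L)}]=0$ and $\mathbb E\|\varepsilon_{t_{i^\star}}\|_2^2=H\sigma^2$, treating the selected training noise as fresh, centred and homoskedastic. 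To get $\ge 2H\sigma^2$ you must state this idealization explicitly. One sufficient form, with $\mathcal A:=\sigma(U,\mathbf X_{t_1}^{(L)},\dots,\mathbf X_{t_n}^{(L)})$, is $\mathbb E[\varepsilon_{t_i}\mid\mathcal A]=0$ and $\mathbb E[\|\varepsilon_{t_i}\|_2^2\mid\mathcal A]=H\sigma^2$; this holds, for example, for independent training pairs with homoskedastic noise.

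Your proposed remedy, window separation $t_{i+1}-t_i\ge L+H$, does not deliver this in the price-level model. Every later input window consists of levels $X_{t_i+H}=X_{t_i}+\varepsilon_{t_i,H}$ plus subsequent increments. So $\varepsilon_{t_i}$ stays correlated with later inputs, and hence with $i^\star$. For a Gaussian random walk and $t_j>t_i+H$, $\mathbb E[X_{t_i+H}-X_{t_i}\mid X_{t_i},X_{t_j}]=\tfrac{H}{t_j-t_i}(X_{t_j}-X_{t_i})\ne 0$.

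Your fallback is in fact the more robust route. Since $m(U)-m(\mathbf X_{t_{i^\star}}^{(L)})$ is $\mathcal A$-measurable,
\[
\mathbb E\|m(U)-m(\mathbf X_{t_{i^\star}}^{(L)})-\varepsilon_{t_{i^\star}}\|_2^2\;\ge\;\mathbb E\sum_{i=1}^n\mathbf 1\{i^\star=i\}\,\mathrm{tr}\,\mathrm{Cov}(\varepsilon_{t_i}\mid\mathcal A),
\]
with no cross-term cancellation required. A floor $\mathrm{tr}\,\mathrm{Cov}(\varepsilon_{t_i}\mid\mathcal A)\ge cH\sigma^2$ then gives the strict comparison for large $n$, though not the constant $2H\sigma^2$. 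That floor is again an extra assumption, however. It also fails outright for stride-one rolling windows, where $\varepsilon_{t_i}$ (for all but the last $H$ indices) is a function of the subsequent training inputs.
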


\begin{proof}
Under the assumptions of the proposition, each training pair indexed by $i$
in the sample $\mathcal D_n$ satisfies
\[
\mathbf{Y}_{t_i}^{(H)} = (x(\mathbf{X}_{t_i}^{(L)}),\cdots,x(\mathbf{X}_{t_i}^{(L)})) + \varepsilon_{t_i},
\quad
\mathbb E[\varepsilon_{t_i}\mid \mathbf{X}_{t_i}^{(L)} ]=0_H,
\quad
\mathbb E\|\varepsilon_{t_i}\|_2^2 = H\sigma^2.
\]
Now, let, $(U \in \mathbb R^L, V \in \mathbb R^H)$ denotes an independent test pair, distributed identically to the training data and independent of $\mathcal D_n$. In particular, $V$ satisfies $V=(x(U),\cdots,x(U))+\epsilon$ with $\mathbb E(V|U)=(x(U),\cdots,x(U))\in \mathbb R^H$. \\ 
First we focus on the loss generated by the rich,  interpolated predictor : \\
\noindent\textit{Interpolating predictor.}
Define the nearest-neighbor index, of $U$ in the set of data $(\mathbf{X}_{t_i}^{(L)}, 1 \leq i \leq n)$ 
\[
i^{\star}(U)\in \operatorname*{arg\,min}_{1\le i\le n}\|U-\mathbf{X}_{t_i}^{(L)}\|_2
\]
(with a fixed deterministic choice when several minimizers occur) and the interpolating predictor
\[
\hat f_n(U) :=  \mathbf{Y}_{t_{i^{\star}(U)}}^{(H)}.
\]
Then 
\[
V-\hat f_n(U)= V-\mathbf{Y}_{t_{i^{\star}(U)}}^{(H)}
=
(x(U)-x(\mathbf{X}_{t_i^*(U)}^{(L)}))(1,\dots,1) + (\varepsilon-\varepsilon_{t_{i^{\star}(U)}}).
\]
Expanding the squared norm and taking expectations, the cross term vanishes because of independence and $\mathbb E[\varepsilon-\varepsilon_{t_{i^{\star}}}\mid U,\mathbf{X}_{t_{i^{\star}(U)}}]=0$; hence
\[
\mathbb E\|V-\hat f_n(U)\|_2^2
=
H\,\mathbb E\big[(x(U)-x(\mathbf{X}_{t_{i^{\star}(U)}}^{(L)}))^2\big]
+
\mathbb E\|\varepsilon-\varepsilon_{t_{i^{\star}(U)}}\|_2^2
\;\ge\;
\mathbb E\|\varepsilon-\varepsilon_{t_{i^{\star}(U)}}\|_2^2.
\]
Moreover, $\varepsilon$ (test noise) is independent of $\varepsilon_{t_{i^{\star}}}$ (a training noise term),
both are centered, and
$\mathbb E\|\varepsilon\|_2^2=\mathbb E\|\varepsilon_{t_{i^{\star}(U)}}\|_2^2=H\sigma^2$,
so
\[
\mathbb E\|\varepsilon-\varepsilon_{t_{i^{\star}(U)}}\|_2^2
=
\mathbb E\|\varepsilon\|_2^2+\mathbb E\|\varepsilon_{t_{i^{\star}(U)}}\|_2^2
=
2H\sigma^2.
\]
Therefore $\mathbb E\|V-\hat f_n(U)\|_2^2 \ge 2H\sigma^2$. \\

\medskip
\noindent We now repeat the same analysis, replacing the interpolating predictor with a
simple parametric regressor. \\

\medskip
\noindent\textit{Simple parametric regressor.}
Consider the one-parameter class, with predictor $\hat f_n^{\ell}$ given by \eqref{fl}, then
\[
V-\hat f_n^{\ell}(U)=(1-\hat a)(x(U),\dots,x(U))+\varepsilon,
\]
so taking expectations and using $\mathbb E[\varepsilon\mid U]=0$ yields
\[
\mathbb E\|V-\hat f_n^{\ell}(U)\|_2^2
=
H\sigma^2 + H\,\mathbb E\big[(\hat a-1)^2 x(U)^2\big].
\]
Under the regularity assumptions (A1)--(A5) stated in Appendix~A.2,
self-normalized martingale arguments yield
\[
\mathbb E\big[(\hat a-1)^2\big]
=
O\!\left(\frac{\sigma^2}{n}\right).
\]
More precisely, Appendix~A.2 proves this bound using a martingale
concentration argument based on Freedman-type inequalities together with
a mild non-degeneracy condition ensuring that the normalization
$\sum_{i=1}^n x(\mathbf X_{t_i}^{(L)})^2$ grows linearly in $n$.
Consequently,
\[
\mathbb E\|Y-\hat f_n^{\ell}(U)\|_2^2
=
H\sigma^2 + O\!\left(\frac{H\sigma^2}{n}\right).
\]
This completes the proof of Proposition~\ref{prop:final_compare}.
\end{proof}

\section{Numerical simulation: foreign exchange forecasting}

We illustrate our theoretical results on high-frequency foreign exchange data, focusing on EUR/USD intraday trajectories sampled every 30 seconds. The dataset spans the period from December 31, 2020 (13:00) to July 31, 2025 (16:59:30). \\

The dataset consists of approximately $1{,}160$ trading days of EUR/USD exchange rates, restricted to the intraday window from 13:00 to 18:00 (UTC). Each day is represented as a univariate time series of prices, from which we extract rolling forecasting windows. For each day $t$, we construct:
\begin{itemize}
    \item an input sequence $\mathbf{X}_t^{(L)} \in \mathbb{R}^{L \times d}$, corresponding to the historical trajectory (from 13:00 to 16:45),
    \item a target sequence $\mathbf{Y}_t^{(H)} \in \mathbb{R}^{H}$, corresponding to the future trajectory (from 16:45 to 18:00).
\end{itemize}


\begin{figure}[t]
\centering
\includegraphics[width=\textwidth]{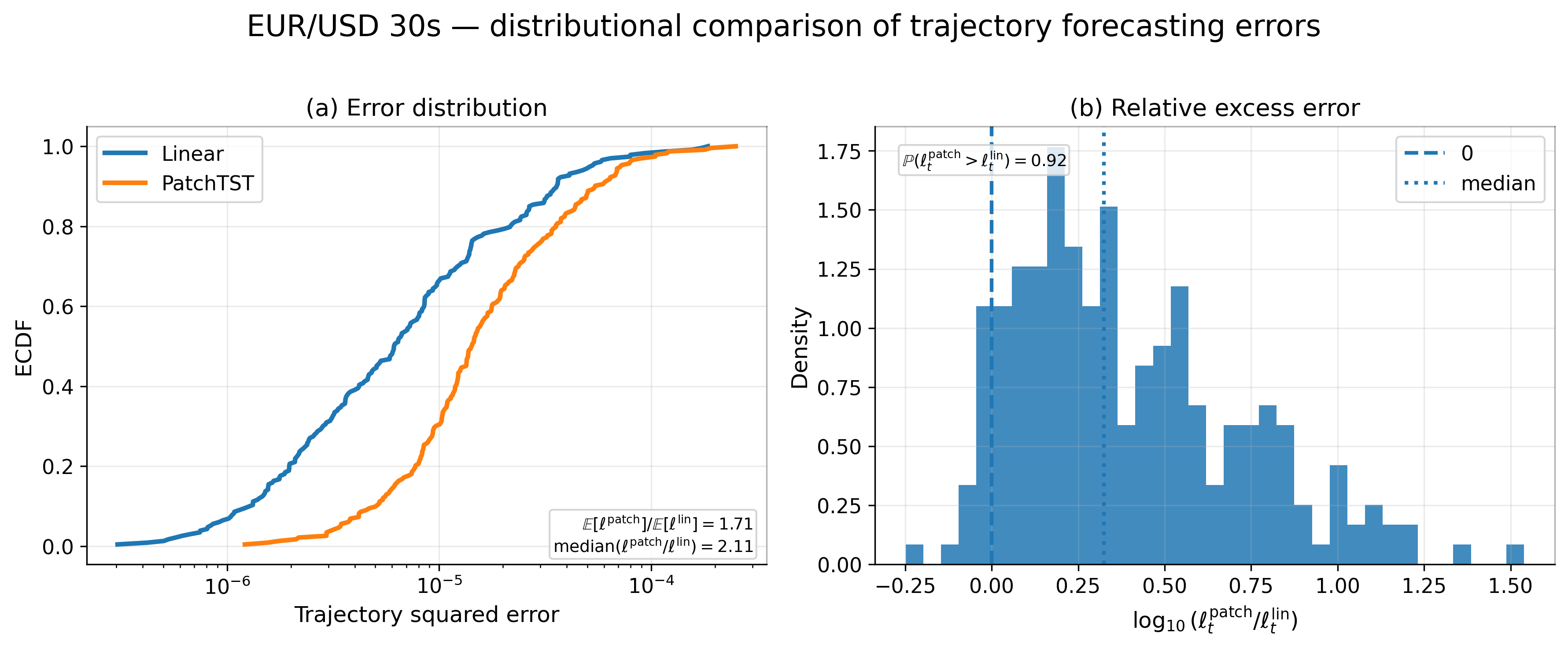}
\caption{
Distribution of trajectory forecasting errors on EUR/USD 30-second data.
Panel (a): empirical cumulative distribution functions.
Panel (b): distribution of the log-ratio of errors.
}
\label{fig:eurusd_ecdf}
\end{figure}

In our experiments, we fix $L = 451$ (past observations) and $H = 30$ (forecast horizon). The dataset contains $1159$ usable forecasting windows, each corresponding to one intraday EUR/USD session. We split these windows chronologically into training (70\%), validation (10\%), and test (20\%) subsets. In the baseline configuration, this yields $n = 811$ training windows, so that the predictor $\hat f_n$ is learned from $811$ examples.

Let $\mathcal D_{\mathrm{test}}$ denote the set of test windows. Each element $t \in \mathcal D_{\mathrm{test}}$ corresponds to one forecasting instance, consisting of a past trajectory $\mathbf X_t^{(L)}$ and its associated future trajectory $\mathbf Y_t^{(H)}$. The empirical quantities reported below are all computed over this test collection of windows.

\noindent The aim is to compare the two predictors $\hat f_n$ and $\hat f_n^{\ell}$ by analyzing the distribution of their trajectory-level errors over the test sample.

\noindent For the more expressive predictor $\hat f_n$, we use a PatchTST model, which follows a standard configuration for long-horizon forecasting, with patch-based tokenization of the input sequence, multi-head self-attention, and shared weights across time (see \cite{nie2023patchtst}).

Before commenting on Figure~\ref{fig:eurusd_ecdf}, we introduce the notation used throughout the analysis.

For each $t \in \mathcal D_{\mathrm{test}}$ we consider the trajectory-level squared loss
\[
\ell_t(f)
:=
\left\|\mathbf{Y}_t^{(H)} - f(\mathbf{X}_t^{(L)})\right\|_2^2.
\]
In particular, we define the losses associated with the two predictors under consideration:
\begin{align}
\ell_t^{\mathrm{lin}}
&:=
\ell_t(\hat f_n^{\ell})
=
\left\|\mathbf{Y}_t^{(H)} - \hat f_n^{\ell}(\mathbf{X}_t^{(L)})\right\|_2^2, \\ \ell_t^{\mathrm{patch}}
&:=
\ell_t(\hat f_n)
=
\left\|\mathbf{Y}_t^{(H)} - \hat f_n(\mathbf{X}_t^{(L)})\right\|_2^2.
\end{align}
 Let $N = |\mathcal D_{\mathrm{test}}|$ denote the number of test windows (here $N \approx 233$), we approximate the risk using the empirical test risk
\[
\hat{\mathcal R}_{\mathrm{test}}(f)
=
\frac{1}{N}
\sum_{t \in \mathcal D_{\mathrm{test}}}
\ell_t(f),
\]
which corresponds to the average trajectory error over the test sample.
In particular, the ratio displayed in Figure~\ref{fig:eurusd_ecdf}(a) corresponds to
\[
\frac{\hat{\mathcal R}_{\mathrm{test}}(\hat f_n)}
{\hat{\mathcal R}_{\mathrm{test}}(\hat f_n^{\ell})}
=
\frac{\frac{1}{N}\sum_t \ell_t^{\mathrm{patch}}}
{\frac{1}{N}\sum_t \ell_t^{\mathrm{lin}}}.
\]

\noindent We also report the empirical frequency in Figure~\ref{fig:eurusd_ecdf}(b), as 
\[
\mathbb{P}_{\mathrm{test}}\!\left(
\ell_t^{\mathrm{patch}} > \ell_t^{\mathrm{lin}}
\right)
=
\frac{1}{N}
\sum_{t \in \mathcal D_{\mathrm{test}}}
\mathbf{1}\!\left\{
\ell_t^{\mathrm{patch}} > \ell_t^{\mathrm{lin}}
\right\},
\]
which measures how often the Transformer-based predictor incurs a larger trajectory error than the linear predictor.

\medskip
\noindent Figure~\ref{fig:eurusd_ecdf} reveals a clear and systematic degradation of the Transformer-based predictor compared to its linear counterpart.

In panel~(a), the empirical cumulative distribution function of $\ell_t^{\mathrm{patch}}$ is uniformly shifted to the right of that of $\ell_t^{\mathrm{lin}}$. This indicates that, for essentially all quantiles, the trajectory-level prediction error of $\hat f_n$ exceeds that of $\hat f_n^{\ell}$. In other words, the degradation is not driven by a few extreme events but is pervasive across the entire distribution. This observation is confirmed by the ratio of empirical risks, with
\[
\frac{\hat{\mathcal R}_{\mathrm{test}}(\hat f_n)}
     {\hat{\mathcal R}_{\mathrm{test}}(\hat f_n^{\ell})}
\approx 1.71,
\]
showing that the average trajectory error of the PatchTST predictor is nearly three times larger than that of the linear model.

Panel~(b) provides a more granular view by considering the distribution of the log-ratio
\[
\log_{10}\!\left(\frac{\ell_t^{\mathrm{patch}}}{\ell_t^{\mathrm{lin}}}\right).
\]
The mass of the distribution lies predominantly above zero, and we observe that
\[
\mathbb{P}_{\mathrm{test}}\!\left(\ell_t^{\mathrm{patch}} > \ell_t^{\mathrm{lin}}\right) \approx 0.92,
\]
meaning that the Transformer predictor yields larger errors on approximately $94\%$ of the forecasting windows.

Together, these results show that the increased expressivity of $\hat f_n$ does not translate into improved predictive performance in this setting. Instead, it induces additional variability, resulting in systematically larger trajectory-level errors.

Importantly, the Transformer-based predictor is not excessively overparameterized in our experiments. We deliberately use a moderately sized PatchTST architecture. In preliminary experiments, increasing model capacity further widened the performance gap with the simple linear benchmark. The reported configuration should therefore be viewed as a conservative choice, which already suffices to exhibit the phenomenon predicted by the theory.


\section{Conclusion}

In this work, we provided a process-level explanation for a widely observed empirical phenomenon: highly expressive models trained by empirical risk minimization under squared loss tend to produce trivial forecasts when applied to financial time series.

By formulating forecasting as a trajectory-to-trajectory learning problem, we showed that empirical risk minimization targets the conditional mean trajectory of the future given the past. While this objective is well suited to structured time series such as energy consumption or traffic data, it becomes effectively degenerate for financial processes whose innovations are not predictably biased.

Our analysis clarifies that this behavior is neither a limitation of a specific architecture, such as Transformers, nor a failure of optimization. On the contrary, increased model capacity and larger datasets accelerate convergence toward the Bayes-optimal predictor, which is flat for prices and zero for returns under standard financial assumptions.

These theoretical insights are supported by empirical results on high-frequency EUR/USD data. We show that a Transformer-based predictor (PatchTST) yields systematically larger trajectory-level errors than a simple linear benchmark, with the discrepancy being pervasive across forecasting windows. Importantly, this effect persists even for moderately sized architectures, while its magnitude increases with model capacity, which is consistent with a variance-driven explanation of the excess error.

Beyond this negative result, our analysis suggests several directions for future work. First, models that aim to learn the full conditional distribution of future trajectories, rather than their conditional mean, may remain meaningful in financial settings. Diffusion-based models and other probabilistic forecasting approaches fall naturally into this category, as they are designed to capture uncertainty and higher-order moments rather than point forecasts. Second, our framework provides a natural way to reason about the signal-to-noise ratio required for successful trajectory forecasting. For structured benchmarks such as electricity consumption, one may ask how much stochastic noise can be injected before the conditional mean becomes effectively trivial and expressive models lose predictive power. Characterizing this transition could help bridge the gap between synthetic benchmarks and real-world financial data.

More broadly, our results suggest that meaningful progress in financial machine learning is unlikely to come from further architectural refinements alone. Instead, it requires rethinking the learning objective, the prediction target, or the decision problem itself. Making these limitations explicit is, in our view, a necessary step toward more principled and effective modeling approaches for financial time series.


\section*{Appendix}

\subsection*{A.1 Consistency of ERM within the model class}
\begin{proof}
Let
\[
\Delta_n := \sup_{f\in\mathcal H}\bigl|\widehat{\mathcal R}_n(f)-\mathcal R(f)\bigr|.
\]
By definition of $\Delta_n$, for any $f\in\mathcal H$,
\[
\mathcal R(f) \le \widehat{\mathcal R}_n(f)+\Delta_n,
\qquad
\widehat{\mathcal R}_n(f) \le \mathcal R(f)+\Delta_n.
\]

\noindent By ERM optimality,
\[
\widehat{\mathcal R}_n(\hat f_n) \le \widehat{\mathcal R}_n(f)
\quad\text{for all } f\in\mathcal H.
\]
Hence, for any $f\in\mathcal H$,
\[
\mathcal R(\hat f_n)
\le
\widehat{\mathcal R}_n(\hat f_n)+\Delta_n
\le
\widehat{\mathcal R}_n(f)+\Delta_n
\le
\mathcal R(f)+2\Delta_n.
\]
Subtracting $\mathcal R(f)$ from both sides and taking the infimum over
$f\in\mathcal H$ yields
\[
0 \le \mathcal R(\hat f_n)-\inf_{f\in\mathcal H}\mathcal R(f) \le 2\Delta_n.
\]
Since $\mathcal H$ is Glivenko--Cantelli, we have $\Delta_n \rightarrow 0$ in probability, and therefore
\[
\left|\mathcal R(\hat f_n)-\inf_{f\in\mathcal H}\mathcal R(f)\right| \rightarrow 0, \quad \text{in\ probability as } n \to \infty.
\]

\end{proof}

\subsection*{A.2 Control of the least-squares coefficient: a self-normalized martingale argument}
\label{app:A2}

In this appendix, we justify the bound
\[
\mathbb E[(\hat a-1)^2]=O\!\left(\frac{\sigma^2}{n}\right)
\]
used in Proposition~\ref{prop:final_compare}, without assuming independent or identically distributed data. Throughout this appendix, to simplify notation, we write
\[
x_i := x(\mathbf X_{t_i}^{(L)}),
\qquad
\varepsilon_i := \varepsilon_{t_i},
\qquad
Y_i := x_i \mathbf 1_H + \varepsilon_i,
\]
and we focus on the scalar least-squares fit along the direction $\mathbf 1_H$.
(The factor $H$ only produces multiplicative constants and is handled in the main text.)

\subsubsection*{Least-squares estimator}

Consider the one-parameter model
\[
f_a(u) = a\,x(u)\,\mathbf 1_H,
\qquad a\in\mathbb R.
\]
The empirical risk minimizer $\hat a$ is given by the usual least-squares formula
\[
\hat a
=
\frac{\sum_{i=1}^n x_i \langle Y_i,\mathbf 1_H\rangle}
{\sum_{i=1}^n x_i^2 \|\mathbf 1_H\|_2^2}
=
1 + \frac{\sum_{i=1}^n x_i \xi_i}{\sum_{i=1}^n x_i^2}.
\]
where we introduced the scalar innovation
\[
\xi_i
:=
\frac{\langle \varepsilon_i,\mathbf 1_H\rangle}{\|\mathbf 1_H\|_2^2}
=
\frac{1}{H}\sum_{h=1}^H \varepsilon_{i,h}.
\]
Hence
\begin{equation}
\label{eq:a_hat_ratio_app}
\hat a - 1 = \frac{M_n}{V_n},
\qquad
M_n := \sum_{i=1}^n x_i \xi_i,
\qquad
V_n := \sum_{i=1}^n x_i^2.
\end{equation}

\subsubsection*{Assumptions}

Let $(\mathcal F_i)_{i\ge0}$ be a filtration such that $x_i$ is $\mathcal F_{i-1}$-measurable and
$\xi_i$ is $\mathcal F_i$-measurable.
We assume:

\begin{enumerate}
\item[(A1)] {Martingale-difference condition:}
\[
\mathbb E[\xi_i\mid\mathcal F_{i-1}] = 0
\quad \text{a.s. for all } i.
\]

\item[(A2)] {Uniform conditional variance bound:}
\[
\mathbb E[\xi_i^2\mid\mathcal F_{i-1}] \le \sigma^2
\quad \text{a.s. for all } i.
\]

\item[(A3)] {Conditional symmetry:}
\[
\mathcal L(\xi_i\mid\mathcal F_{i-1})
=
\mathcal L(-\xi_i\mid\mathcal F_{i-1})
\quad \text{a.s. for all } i.
\]

\item[(A4)] {Small-ball (non-degeneracy) condition:}
there exist constants $b>0$ and $p\in(0,1)$ such that
\[
\mathbb P(|x_i|\ge b \mid \mathcal F_{i-1})\ge p
\quad \text{a.s. for all } i.
\]

\item[(A5)] {Lower-tail integrability of the self-normalizer:}
there exists a constant $c>0$ such that
\[
\sup_{n\ge1}
\mathbb E\!\left[\frac{n}{V_n}\mathbf 1_{\{V_n<cn\}}\right]
<\infty.
\]
\end{enumerate}

Assumptions (A1) to (A3) encode the absence of predictable structure in the innovations,
while (A4) prevents the design from collapsing by ensuring that the regressors
remain nondegenerate with positive probability.
Assumption (A5) controls the lower tail of the normalization
$V_n=\sum_{i=1}^n x_i^2$ and guarantees the integrability of its inverse.
In other words, even on the rare events where $V_n$ falls below its typical
linear scale (see \eqref{eq:Vn_tail}), the factor $1/V_n$ remains integrable. In financial terms, for example, (A5) excludes pathological situations where the
observed price level remains arbitrarily close to zero over a long time
interval.


Under (A1), the process $(M_k)_{k\le n}$ defined in \eqref{eq:a_hat_ratio_app} is a martingale with respect to $(\mathcal F_k)$, with increments
\[
\Delta M_i = x_i \xi_i.
\]
Its predictable quadratic variation is
\[
\langle M\rangle_n
=
\sum_{i=1}^n \mathbb E[(\Delta M_i)^2\mid\mathcal F_{i-1}]
=
\sum_{i=1}^n x_i^2\,\mathbb E[\xi_i^2\mid\mathcal F_{i-1}].
\]
As by (A2),
\begin{equation}
\label{eq:bracket_dom_app}
\langle M\rangle_n \le \sigma^2 V_n
\qquad \text{a.s}
\end{equation}

\noindent then 
\[
\mathbb P\!\left(\frac{|M_n|}{V_n}\ge t\right)
=
\mathbb P\!\left(|M_n|\ge tV_n,\ \langle M\rangle_n \le \sigma^2 V_n\right).
\]
Also from the Bercu-Touati \cite{BercuTouati2008} inequality: for all $u>0$ and $v>0$,
\[
\mathbb P\!\left(|M_n|\ge u,\ \langle M\rangle_n \le v\right)
\le
2\exp\!\left(-\frac{u^2}{2v}\right).
\]
Applying this inequality with $u=tV_n$ and $v=\sigma^2 V_n$
yields
\[
\mathbb P\!\left(\frac{|M_n|}{V_n}\ge t \,\middle|\, V_n\right)
\le
2\exp\!\left(-\frac{t^2 V_n}{2\sigma^2}\right).
\]
Taking expectation with respect to $V_n$, we obtain the following right tale for $\frac{|M_n|}{V_n}$ 
\begin{equation}
\label{eq:ratio_tail_app_correct}
\mathbb P\!\left(\frac{|M_n|}{V_n}\ge t\right)
\le
2\,\mathbb E\!\left[
\exp\!\left(-\frac{t^2 V_n}{2\sigma^2}\right)
\right].
\end{equation}
Using the identity $\mathbb E[Z^2]=\int_0^\infty \mathbb P(Z^2\ge s)\,ds$ for $Z\ge0$ and
\eqref{eq:a_hat_ratio_app},
\[
\mathbb E[(\hat a-1)^2]
=
\int_0^\infty
\mathbb P\!\left(\frac{|M_n|}{V_n}\ge \sqrt{s}\right)\,ds.
\]
Applying \eqref{eq:ratio_tail_app_correct} and Fubini's theorem yields
\begin{equation}
\label{eq:moment_Vn}
\mathbb E[(\hat a-1)^2]
\le
4\sigma^2\,\mathbb E\!\left[\frac{1}{V_n}\right].
\end{equation}

To obtain an upper bound for $\mathbb E\!\left[\frac{1}{V_n}\right]$, we first control
the lower tail of the normalization
\[
V_n = \sum_{i=1}^n x_i^2 .
\]

Introduce the indicator variables
\[
I_i := \mathbf 1_{\{|x_i|\ge b\}} .
\]
Under the small-ball condition (A4), we have almost surely
\[
V_n = \sum_{i=1}^n x_i^2
\;\ge\;
b^2 \sum_{i=1}^n I_i .
\]

Define
\[
Z_i := I_i - \mathbb E[I_i\mid\mathcal F_{i-1}],
\qquad
S_n := \sum_{i=1}^n Z_i .
\]
The process $(S_n)$ is a martingale with bounded increments
$|Z_i|\le 1$.

Since $\mathbb E[I_i\mid\mathcal F_{i-1}] \ge p$ almost surely by (A4),
we can write
\[
\sum_{i=1}^n I_i
=
\sum_{i=1}^n \mathbb E[I_i\mid\mathcal F_{i-1}]
+
S_n
\;\ge\;
pn + S_n .
\]

Therefore, for any $\eta\in(0,p)$,
\[
\mathbb P\!\left(\sum_{i=1}^n I_i \le (p-\eta)n\right)
\le
\mathbb P(S_n \le -\eta n).
\]

Applying the Azuma--Hoeffding inequality to the martingale $(S_n)$
yields
\[
\mathbb P(S_n \le -\eta n)
\le
\exp(-2\eta^2 n).
\]

Since $V_n \ge b^2 \sum_{i=1}^n I_i$, we obtain the lower-tail bound
\begin{equation}
\label{eq:Vn_tail}
\mathbb P(V_n \le b^2(p-\eta)n)
\le
\exp(-2\eta^2 n).
\end{equation}

Let $c := b^2(p-\eta) > 0$.
We now decompose
\[
\mathbb E\!\left[\frac1{V_n}\right]
=
\mathbb E\!\left[\frac1{V_n}\mathbf 1_{\{V_n\ge cn\}}\right]
+
\mathbb E\!\left[\frac1{V_n}\mathbf 1_{\{V_n<cn\}}\right].
\]

{Contribution of the typical event :}
On the event $\{V_n\ge cn\}$ we have $1/V_n \le 1/(cn)$, hence
\[
\mathbb E\!\left[\frac1{V_n}\mathbf 1_{\{V_n\ge cn\}}\right]
\le
\frac1{cn}.
\]

{Contribution of the rare event : }
For the second term we use Assumption~(A5), which controls the
integrability of the inverse normalization.
Indeed,
\[
\mathbb E\!\left[\frac1{V_n}\mathbf 1_{\{V_n<cn\}}\right]
=
\frac1n
\mathbb E\!\left[
\frac{n}{V_n}\mathbf 1_{\{V_n<cn\}}
\right].
\]

Assumption~(A5) ensures that
\[
\sup_{n\ge1}
\mathbb E\!\left[
\frac{n}{V_n}\mathbf 1_{\{V_n<cn\}}
\right]
<\infty ,
\]
so there exists a constant $C>0$ such that for all $n$
\[
\mathbb E\!\left[\frac1{V_n}\mathbf 1_{\{V_n<cn\}}\right]
\le
\frac{C}{n}.
\]

Note that the tail bound \eqref{eq:Vn_tail} shows that the event
$\{V_n<cn\}$ is exponentially unlikely, which confirms that the
contribution of this term is negligible.

\paragraph{Conclusion.}
Combining the two bounds yields
\[
\mathbb E\!\left[\frac1{V_n}\right]
\le
\frac1{cn} + \frac{C}{n}
=
O\!\left(\frac1n\right).
\]

Finally, combining this estimate with \eqref{eq:moment_Vn}, we obtain
\[
\mathbb E[(\hat a-1)^2]
\le
4\sigma^2\,\mathbb E\!\left[\frac1{V_n}\right]
=
O\!\left(\frac{\sigma^2}{n}\right),
\]
which proves the claimed bound. \hfill $\square$

\section*{Acknowledgments}

The author thanks Caroline Kalla (IUT d’Orléans, France) for discussions on forecasting in weakly structured financial time series, and for providing and preprocessing the high-frequency EUR/USD dataset used in the empirical analysis.


\bibliographystyle{plainnat}

\end{document}